\newtheorem{proposition}{Proposition}
\newtheorem{assumption}{Assumption}
\definecolor{commentcolor}{RGB}{110,154,155}   
\newcommand{\PyComment}[1]{\ttfamily\textcolor{commentcolor}{\# #1}}  
\newcommand{\PyCode}[1]{\ttfamily\textcolor{black}{#1}} 
\journal{Neural Networks}
\begin{document}

\begin{frontmatter}


\title{Feature propagation as self-supervision signals on graphs}

\author[upc]{Oscar Pina \corref{corresponding}}
\ead{oscar.pina@upc.edu}
\author[upc]{Verónica Vilaplana}
\ead{veronica.vilaplana@upc.edu}

\cortext[corresponding]{Corresponding author.}
\affiliation[upc]{organization={Universitat Politècnica de Catalunya - BarcelonaTech (UPC)},
            city={Barcelona},
            country={Spain}}

\begin{abstract}
    Self-supervised learning is gaining considerable attention as a solution to avoid the requirement of extensive annotations in representation learning on graphs. Current algorithms are based on contrastive learning, which is computation an memory expensive, and the assumption of invariance under certain graph augmentations. However, graph transformations such as edge sampling may modify the semantics of the data so that the iinvariance assumption may be incorrect. We introduce Regularized Graph Infomax (RGI), a simple yet effective framework for node level self-supervised learning that trains a graph neural network encoder by maximizing the mutual information between output node embeddings and their propagation through the graph, which encode the nodes' local and global context, respectively. RGI do not use graph data augmentations but instead generates self-supervision signals with feature propagation, is non-contrastive and does not depend on a two branch architecture. We run RGI on both transductive and inductive settings with popular graph benchmarks and show that it can achieve state-of-the-art performance regardless of its simplicity. \footnote{The code is available at \href{https://github.com/oscar97pina/gssl-rgi}{https://github.com/oscar97pina/gssl-rgi}}
\end{abstract}



\begin{keyword}



Graph neural network \sep self-supervised learning \sep graph representation learning \sep regularization

\end{keyword}

\end{frontmatter}

\section{Introduction}
\label{sec:introduction}

The primary goal of self-supervised learning is to learn meaningful representations from large amounts of unlabeled data that can be employed to efficiently fit other downstream tasks with a small amount of labeled samples. The challenge is to define an appropriate auxiliary task that leads the model to capture the relevant parts of the input data.

A popular solution within the computer vision field is to force the encoder to be invariant under certain transformations applied to the input, such as rotation and cropping, by generating two views of the same image by applying these random augmentations and maximizing the agreement between their representations. Owing to their success, these works have been adapted by the graph learning community in order to train graph neural networks encoders in a self-supervised manner \cite{chen2020big, chen2020simple, grill2020bootstrap, zbontar2021barlow}. Consequently, graph specific data augmentation techniques are required to generate the graph views. Graph transformations can focus on both node attributes and the topology of the graph. Indeed, popular choices are node attribute masking and edge sampling \cite{thakoor2021bootstrapped, zhang2021canonical, bielak2021graph, Zhu:2020vf}. However, the idea behind invariance via data augmentations roots in the fact that it is assumed that these transformations do not change the semantics of the data and the information contained about the downstream task is kept. Whereas we can understand image augmentations, it is not clear how graph transformations modify its semantics nor if they can be applied to all graph domains \cite{NEURIPS2021_ff1e68e7}. To overcome this issue, graph diffusion has been proposed to create the alternative views \cite{icml2020_1971}, and other works attempt to get rid of the augmentations by designing strategies that leverage the local neighborhood of the nodes  \cite{peng2020graph, lee2021augmentation, https://doi.org/10.48550/arxiv.2204.04874}.

In this manuscript, we introduce \textit{Regularized Graph Infomax (RGI)}, a simple yet effective self-supervised learning framework for graph structured data. RGI trains the encoder based on \textit{embedding propagation} as self supervision signals, which consists of propagating the output embeddings through the graph and using them as prediction target from the node embeddings, which leads to context-aware embeddings that do not encode unknown invariances. Additionally, variance-covariance regularization \cite{bardes2022vicreg} is used in order to avoid the collapse of the representations, which is generally more efficient than contrastive methods. Therefore, the algorithm is augmentation-free, non-contrastive, and does not require a two-branch architecture nor complex training strategies.

This document is organized as follows, in Section \ref{sec:background} we provide the necessary knowledge to fully understand the arguments and statements of the method, in Section \ref{sec:method} we detail our algorithm RGI, motivate and explain the usage of feature propagation as self-supervision signals (Section \ref{sec:method_propagation}), provide a \textit{local-global} context perspective of RGI and feature propagation (Section \ref{sec:method_locglob}), detail the loss function (Section \ref{sec:method_loss}) and show that RGI is maximizing the mutual information between the node embeddings and their propagation through the graph (Section \ref{sec:method_motiv}). Then, we perform empirical evaluation on popular graph benchmarks in Section \ref{sec:eval} and analyse the influence of different parts of the algorithm in the ablation study of Section \ref{sec:ablations}. Finally, we discuss RGI and the results in Section \ref{sec:discussion} and conclude the work.

\section{Background and related work}
\label{sec:background}

\subsection{Information theory for representation learning}
\paragraph{Mutual information} The mutual information (MI) between two random variables $X$ and $Y$, $I(X;Y)$ is a symmetric quantity, $I(X;Y)=I(Y;X)$, that measures how much information one variable carries about the other. Formally, it is defined as:
\begin{equation}
I(X;Y) = H(X) - H(X|Y) = H(Y) - H(Y|X)
\label{eq:mi}
\end{equation}
where $H(X)$ is the entropy of $X$ and $H(X|Y)$ is the conditional entropy of $X$ given $Y$. This quantity can be lower-bounded by the expected reconstruction error, which is usually employed in generative models:
\begin{equation}
I(X;Y) = H(X) - H(X|Y) \geq H(X) - \mathcal{R}(X|Y)
\label{eq:mi_bound}
\end{equation}
where $\mathcal{R}(X|Y)$ is the expected reconstruction error of $X$ given $Y$. In practice, this expected error is approximated with the square loss or the cross-entropy loss.

\paragraph{InfoMax principle} The InfoMax principle \cite{36} states that a neural network can be trained in a self-supervised manner by maximizing the mutual information between its input $X$ and its output $Y$, $I(X;Y)$. To do so, \cite{Bell1995-je}  suggests that it is enough to maximize the entropy of $Y$, which, for the example of a one layer n-to-n network, is achieved by maximizing the logarithm of the jacobian of the weights. The intuition is that this quantity can be seen as the log of the volume space of $Y$ onto which the values of $X$ are mapped.

\subsection{Multi-view representation learning} 
The InfoMax principle is extended to a multi-view approach, in which rather than maximizing the MI between the input and output of the network, the agreement is maximized between the representations of two different views of the input. This scenario has two main advantages, (i) the loss is computed in the representation space, which is in general lower dimensional and avoids focusing on small details of the input and (ii) views can be defined to capture different aspects of the data \cite{Tschannen2020On}.

\paragraph{Local - global MI} Deep InfoMax \cite{hjelm2018learning} trains an encoder maximizing the average mutual information between local patches and global representations of an image. Deep Graph InfoMax \cite{velickovic2018deep} and InfoGraph \cite{sun2019infograph} extend this work to the graph domain, targetting the MI between node and graph level embeddings. The graph representation is obtained with a global pooling layer applied to the local node embeddings. In DGI, since most datasets consist of one single graph, the authors create a corrupted version of the graph by shuffling the node features and contrasting negative and positive pairs.

\paragraph{Invariance via data augmentation} SimCLR \cite{chen2020simple}, BYOL \cite{grill2020bootstrap}, Barlow Twins \cite{zbontar2021barlow} and VICReg \cite{bardes2022vicreg}, among others, create two augmented views of an image via data augmentation, such as image rotation and cropping, and train the encoder to be invariant to those augmentations, not necessarily with MI objectives. These works are also extended for self-supervised graph representation learning. For instance, GRACE \cite{Zhu:2020vf} follows a similar approach to SimCLR based on contrastive learning, BGRL \cite{thakoor2021bootstrapped} employs the same asymmetric scheme than BYOL and G-BT, \cite{bielak2021graph} directly extends Barlow Twins cross-covariance regularization objective.

\subsection{Avoiding collapse}
Maximizing the agreement between views can lead to a total collapse in which the encoder outputs the same representation independently from the input. In order to diminish this phenomena, either architectural, regularization or training tricks can be employed:

\paragraph{Contrastive learning} Contrastive methods not only encourage the two views of the input (positive pairs) to be similar, but they also force views from different inputs (negative pairs) to be different \cite{chen2020simple, Zhu:2020vf, NEURIPS2021_ff1e68e7, icml2020_1971, velickovic2018deep, hjelm2018learning}. They have been successfully applied to all data domains, nonetheless, their performance is highly influenced by the number of negative pairs, usually requiring many of them to work efficiently.

\paragraph{Knowledge distillation} Knowledge distillation methods do not need negative pairs \cite{grill2020bootstrap, thakoor2021bootstrapped}. Instead, they construct a teacher-student asymmetric architecture combined with a stop-gradient operation. Concretely, the networks are fed with two augmented views and the student network is trained to predict the output of the teacher model. Collapse is avoided by not backpropagating gradients through the teacher network to update its parameters, but setting them to be a moving average of the student's parameters.

\paragraph{Covariance regularization} Covariance regularization consists of extending the loss function by including regularization terms on the covariance matrix of the representations, forcing high variances for every feature and low co-variances \cite{bardes2022vicreg, zbontar2021barlow, zhang2021canonical}. Under the gaussianity assumption, these loss terms attempt to regularize the entropy of the latent space to avoid the collapse \cite{https://doi.org/10.48550/arxiv.2207.10081}.

\begin{figure}[ht]
\caption{Visual illustration of RGI}
\label{fig:rgi}
\vskip 0.2in
\begin{center}
\centerline{\includegraphics[width=0.7\textwidth]{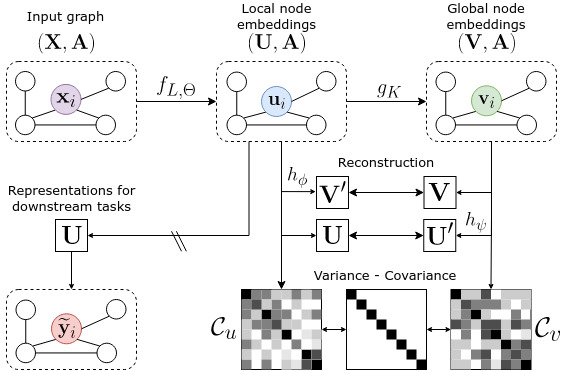}}

\vskip 0.2in
\footnotesize{Given the input graph $\mathcal{G}=(\mathbf{X},\mathbf{A})$, a GNN encoder $f_{\Theta}$ extracts node features $\mathbf{U} = f_{\Theta}(\mathbf{X},\mathbf{A})$ and they are propagated through the graph during $K$ steps to obtain the nodes' global context $\mathbf{V}=g_K( \mathbf{U}, \mathbf{A})$. RGI maximizes the MI between $\mathbf{U}$ and $\mathbf{V}$ based on the lower bound of the MI. To do so, two auxiliary neural networks $h_{\phi}$ and $h_{\psi}$ are trained to predict $V$ from $U$ and $U$ from $V$, respectively. Additionally, the covariance matrices of the local and global node embeddings are regularized to have large diagonal elements and off-diagonal elements close to zero, attempting to regularize the entropy of $\mathbf{U}$ and $\mathbf{V}$. The neural networks $h_{\phi}$ and $h_{\psi}$, as well as $\mathbf{V}$ are ignored during inference.}

\end{center}
\vskip -0.2in
\end{figure}
\section{Regularized graph infomax}
\label{sec:method}

In this section, we introduce the concept of feature propagation as self-supervision signals and detail our algorithm \textit{RGI - Regularized Graph Infomax} for self-supervised learning on graphs, which we show that maximizes the mutual information between the representations output by a graph neural network encoder and their propagation through the graph. The algorithm is described in Algorithm \ref{algo:rgi} and a visual illustration is shown in Figure \ref{fig:rgi}.

\begin{algorithm}[tb]
\caption{RGI}
\label{algo:rgi}
\begin{algorithmic}
   \STATE {\bfseries Input:} node features $\mathbf{X}$; adjacency matrix $\mathbf{A}$; parameters $\Theta$, $\phi$ and $\psi$; backbone $f_{\Theta}$; reconstruction networks $h_{\phi}$ and $h_{\psi}$; feature propagation function $g_K$.
   \REPEAT
   \STATE // obtain node embeddings
   \STATE $\mathbf{U} = f_{\Theta}({\mathbf{X}}, {\mathbf{A}})$
   \STATE // propagate node embeddings
   \STATE $\mathbf{V} = g_K({\mathbf{U}, \mathbf{A}})$
   \STATE // reconstruction
   \STATE $\mathbf{V}'=h_{\phi}(\mathbf{U})$
   \STATE $\mathbf{U}'=h_{\psi}(\mathbf{V})$
   \STATE // covariance matrices
   \STATE $\mathcal{C}_u= \frac{1}{N} \bar{\mathbf{U}}^T \bar{\mathbf{U}} $
   \STATE $\mathcal{C}_v     = \frac{1}{N} \bar{\mathbf{V}}^T \bar{\mathbf{V}} $
   \STATE // loss function
   \STATE $\mathcal{L}_{rec} =  \parallel \mathbf{U} - \mathbf{U}' \parallel^2_F +  \parallel \mathbf{V} - \mathbf{V}' \parallel^2_F $
   \STATE $\mathcal{L}_{var} = (1-$diag$(\mathcal{C}_u))^2 + (1-$diag$(\mathcal{C}_v))^2$
   \STATE $\mathcal{L}_{cov} = ($off-diag$(\mathcal{C}_u))^2 + ($off-diag$(\mathcal{C}_v)^2$
   \STATE $\mathcal{L} = \lambda_1 \mathcal{L}_{rec} + \lambda_2 \mathcal{L}_{var} + \lambda_3 \mathcal{L}_{cov} $
   \STATE // update parameters
   \STATE $\Theta, \phi, \psi \leftarrow \bigtriangledown_{\Theta, \phi, \psi} \mathcal{L}$
   \UNTIL{convergence}
   \STATE {\bfseries return} $\mathbf{U}$
\end{algorithmic}
\end{algorithm}

\subsection{Context and notation}
\label{sec:method_context}
Let $\mathcal{G} = (\mathcal{V}, \mathcal{E})$ be a graph of $N$ nodes where $\mathcal{V}$ is the node set and $\mathcal{E}$ the edge set. Node attributes $X$ come from a $d$-dimensional distribution and the nodes' realizations are arranged in a matrix $\mathbf{X} = \{ \mathbf{x}_i \}_{i=1}^{N} \in \mathbb{R}^{N \times d}$, where $\mathbf{x}_i \in \mathbb{R}^{d}$ are the attributes of node $i$. $\mathcal{E}$ comes in the form of (unweighted) adjacency matrix $\mathbf{A} \in \{0,1\}^{N \times N}$, $\mathbf{A}_{ij} = 1$ if the edge $e_{ij} \in \mathcal{E}$, $0$ otherwise. The $K$-hop neighborhood of a node $i$, $\mathcal{G}_i^{(K)}$, is represented by the set of neighbors' attributes, $\mathbf{X}_i^{(K)}$ and the induced adjacency matrix $\mathbf{A}_i^{(K)}$. The same notation criterion of $X$, $\mathbf{X}$, $\mathbf{x}_i$ and $\mathbf{X}_i$ will be employed for other node hidden representations.

The goal of self-supervised learning on graphs is to fit a graph neural network encoder $f_{\Theta} : \mathbb{R}^{N \times d} \times \{0,1\}^{N \times N} \xrightarrow{} \mathbb{R}^{N \times D}$, $\mathbf{U} = f_{\Theta}(\mathcal{G}) = f_{\Theta}(\mathbf{X}, \mathbf{A})$, parametrized by $\Theta$, that obtains a $D$-dimensional vector representation $\mathbf{u}_i$ for every node of the input graph without relying on node annotations.

\subsection{Feature propagation as supervision signals}
\label{sec:method_propagation}

RGI train an encoder to maximize the mutual information between the embeddings output by the encoder, $\mathbf{U} = f_{\Theta}(\mathbf{X}, \mathbf{A})$, and their propagation through the graph, $\mathbf{V} = g_K(\mathbf{U}, \mathbf{A})$, named \textit{local} and \textit{global} node views or embeddings, respectively. 

As shown in Section \ref{sec:method_locglob}, the propagated embeddings capture the global information and structure of the graph while being particular to every node. Leveraging them as supervision signals, the model is trained to output more informative node embeddings that are aware of both their local and global context within the graph. The propagation function $g_K$ is implemented as a $K$ order polynomial of the graph adjacency matrix or any of its variants, such as the symmetrically normalized adjacency matrix, so that RGI does not rely on a two branch architecture and the computational burden to obtain the supervision signals is minimal compared to other methods.

We perform feature propagation over the embeddings $\mathbf{U}$ to create the supervision signals rather than the propagation of the raw input features $\mathbf{X}$ since real world graph data is usually incomplete and sparse. Additionally, there may be scenarios without initial node features.

\subsection{Local and global contexts}
\label{sec:method_locglob}
We name $\mathbf{U}$ and $\mathbf{V}$ as node local and global node embeddings, respectively. Note, however, that the global definition is different from the global pooling proposed in DGI \cite{velickovic2018deep}, since we define node level global embeddings rather than graph level. The reason to call $\mathbf{V}$ \textit{global} is as follows. For a target node $i$, we obtain its local embedding with a $L$-layer GNN, $\mathbf{u}_i=f_{\Theta}(\mathbf{X}_i^{(L)}, \mathbf{A}_i^{(L)})$, so that it depends on its $L$-hop neighborhood. Afterwards, its global context is computed by propagating the representations through the graph during $K$ steps, $\mathbf{v}_i=g_K(\mathbf{U}_i^{(K)}, \mathbf{A}_i^{(K)})$. Therefore, $\mathbf{v}_i$ contains information of the $(L+K)$-hop neighborhood of $i$. It is known that for a small-world network $\mathcal{G}$ of $N$ nodes, the diameter of the graph is $diam(\mathcal{G})=log(N)$. Consequently, in a small world network, $\mathbf{v}_i$ can encode global information of every node in the graph as long as $L + K \simeq log(N)$.

\subsection{Loss function}
\label{sec:method_loss}
Based on Equation \ref{eq:mi_bound}, the mean squared error is employed as loss function to maximize the mutual information. RGI includes an auxiliary network $h_{\phi}$ implemented as a multi-layer perceptron to predict the local embeddings from the global embeddings. The network is jointly trained with the encoder, but discarted afterwards.

\begin{equation}
    \label{eq:loss_rec}
    \mathcal{L}_{rec}(U|V) = \frac{1}{N} \sum_{i=1}^{N} \parallel \mathbf{u}_i - h_{\phi}( \mathbf{v}_i ) \parallel_2^2
\end{equation}

However, as the representations $U$ are not fixed, but they depend on the encoder parameters instead, only addressing the reconstruction error would lead the encoder to a collapse, in which the input is ignored and it outputs a constant representation. To alleviate this problem, we incorporate covariance matrix regularization loss terms to avoid the collapse of the representations \cite{zbontar2021barlow, bardes2022vicreg, bielak2021graph, zhang2021canonical}:

\begin{equation}
    \label{eq:loss_var}
    \mathcal{L}_{var}(U) = \frac{1}{D} \sum_{n=1}^{D} \left ( 1 - \mathcal{C}_{nn} \right )^2
\end{equation}

\begin{equation}
    \label{eq:loss_cov}
    \mathcal{L}_{cov}(U) = \frac{1}{D} \sum_{n=1}^{D} \sum_{m \neq n} \mathcal{C}_{nm}^2
\end{equation}

where $\mathcal{C}$ is the sample covariance matrix of $U$. Concretely, the former guides the diagonal elements (variances) to be close to one whereas the latter forces the non-diagonal elements (covariances) to be zero. Intuitively, maximizing the variance avoids the total collapse to a constant representation. On the other hand, covariance minimization encourages the encoder to leverage the whole capacity of the representation space rather than projecting the points to a lower dimensional subspace, also known as dimensional collapse \cite{hua2021feature}. The loss function is a weighted combination of these three terms:

 \begin{equation}
     \label{eq:loss_u}
     \mathcal{L}_u = \lambda_1 \mathcal{L}_{rec}(U|V) + \lambda_2 \mathcal{L}_{var}(U) + \lambda_3 \mathcal{L}_{cov}(U)
 \end{equation}

Where $\lambda_1, \lambda_2, \lambda_3 \in \mathbb{R}$ are weight hyperparameters. In practice, we symmetrize this loss by also predicting $\mathbf{v}_i$ from $\mathbf{u}_i$ with another auxiliary network $h_{\psi}$ and applying variance-covariance regularization to $V$.

\subsection{Mutual information maximization}
\label{sec:method_motiv}

In this section, we show that, as previously stated in Section \ref{sec:method_propagation}, RGI maximizes the mutual information between the node embeddings and their propagation.

\begin{assumption}
    \label{ass:gau_u}
    The node embeddings $\mathbf{U}$ follow a Gaussian distribution $U \sim N(\mu_u, \Sigma_u)$.
\end{assumption}

\begin{assumption}
    \label{ass:gau_v}
    The propagated embeddings $\mathbf{V}$ follow a Gaussian distribution $V \sim N(\mu_v, \Sigma_v)$.
\end{assumption}

\begin{proposition}
    \label{prop:mi_max}
    Based on Assumption \ref{ass:gau_u}, minimizing the objective $\mathcal{L}_u$ of Equation \ref{eq:loss_u} maximizes the mutual information between $U$ and $V$. 
\end{proposition}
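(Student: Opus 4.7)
The plan is to show that, up to constants and estimation noise, $\mathcal{L}_u$ is an upper bound on $-I(U;V)$ under the decomposition $I(U;V)=H(U)-H(U|V)$, so that minimizing $\mathcal{L}_u$ maximizes a valid lower bound on the mutual information. I would therefore split the argument along the two terms of this decomposition and associate each of them with the appropriate pieces of the loss: $\mathcal{L}_{rec}$ with $H(U|V)$, and $\mathcal{L}_{var}+\mathcal{L}_{cov}$ with $H(U)$.

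For the conditional-entropy term, I would invoke Equation \ref{eq:mi_bound} directly: $H(U|V)\leq \mathcal{R}(U|V)$, where $\mathcal{R}(U|V)$ is the expected reconstruction error of $U$ from $V$. Since $h_\phi$ is trained jointly with the encoder under the squared loss, its optimum at a fixed encoder is the conditional mean $\mathbb{E}[\mathbf{u}\mid\mathbf{v}]$, and the empirical MSE that defines $\mathcal{L}_{rec}(U|V)$ is the Monte-Carlo estimator of $\mathcal{R}(U|V)$ over the $N$ nodes. Hence minimizing $\lambda_1 \mathcal{L}_{rec}$ tightens and pushes down this upper bound on $H(U|V)$.

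For the marginal entropy, I would use Assumption \ref{ass:gau_u} to write $H(U)=\frac{D}{2}\log(2\pi e)+\frac{1}{2}\log|\Sigma_u|$, so maximizing $H(U)$ amounts to maximizing $\log|\Sigma_u|$. Hadamard's inequality gives $|\Sigma_u|\leq \prod_{n}(\Sigma_u)_{nn}$ with equality iff $\Sigma_u$ is diagonal, which is precisely what $\mathcal{L}_{cov}(U)\to 0$ enforces on the sample covariance $\mathcal{C}_u$. Conditional on this diagonal structure, the product $\prod_n (\Sigma_u)_{nn}$ is then maximized, subject to the implicit unit-scale constraint the variance term encodes, by driving each diagonal entry to $1$, which is exactly the fixed point of $\mathcal{L}_{var}(U)$. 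Combining both arguments, $-\mathcal{L}_u$ lower-bounds $I(U;V)$ up to additive constants, yielding the claim.

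The main obstacle, in my view, is making precise what ``maximizing $H(U)$'' means when the encoder is free to rescale its outputs: without the $\mathcal{L}_{var}$ term the determinant $|\Sigma_u|$ could be inflated arbitrarily, so the argument must be framed as a constrained maximization in which $\mathcal{L}_{var}$ plays the role of a soft unit-variance constraint rather than an intrinsic entropy-maximizer. A secondary technical point is the identification of the sample covariance $\mathcal{C}_u$ used in Algorithm \ref{algo:rgi} with the population covariance $\Sigma_u$ entering the Gaussian entropy formula; this would be justified by a law-of-large-numbers argument over the $N$ nodes, treating the node embeddings as (approximately) i.i.d. samples from the Gaussian posited in Assumption \ref{ass:gau_u}.
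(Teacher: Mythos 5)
Your proposal follows essentially the same route as the paper's own proof: it invokes the lower bound $I(U;V) \geq H(U) - \mathcal{R}(U|V)$ of Equation \ref{eq:mi_bound}, identifies $\mathcal{L}_{rec}$ with the reconstruction term, and uses Assumption \ref{ass:gau_u} so that the variance--covariance terms act as a proxy for maximizing $\log\left|\Sigma_u\right|$ via the sample covariance. Your additions (the explicit Gaussian entropy formula, Hadamard's inequality to justify the diagonal proxy, the sample-versus-population covariance caveat, and framing $\mathcal{L}_{var}$ as a soft unit-variance constraint) merely make the same argument more precise than the paper's informal version, so the proof is correct and matches the paper's approach.
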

\begin{proof}
    The proof is based on the MI lower bound of Equation \ref{eq:mi_bound}. In order to maximize the MI between $U$ and $V$, it is necessary to address both the maximization of $H(U)$ and the minimization of $\mathcal{R}(U|V)$, as the representations $U$ are not fixed and only optimizing $\mathcal{R}(U|V)$ would also affect the value of $H(U)$. The first term of the loss in Equation \ref{eq:loss_u} is the reconstruction error $\mathcal{R}(U|V)$, since our representations are continuous-valued, this is achieved with the mean-squared error. Secondly, under the Gaussianity assumption, the entropy of $U$ is proportional to the logarithm of the determinant of the covariance matrix, that is,  $H(U) \propto log(\left | \Sigma_u \right |)$. Being the logarithm an increasing function, the entropy maximization can be tackled by maximizing $\left | \Sigma_u \right |$. Although we do not have access to the true covariance matrix, it is approximated by targetting the sample covariance $\mathcal{C}$. Rather than directly maximizing the determinant, the proxy consists of maximizing the diagonal elements of the matrix while also forcing the non-diagonal elements to be close to 0.
    
    Despite the fact that the mutual information is a symmetric quantity, the approximation based on the reconstruction error and sample covariance matrix are not. Hence, the loss is symmetrized given the Assumption \ref{ass:gau_v}.
\end{proof}




\section{Evaluation}
\label{sec:eval}

In this section, we evaluate the quality of the node level representations output by our method on both transductive and inductive settings. 

\paragraph{Datasets} For transductive learning we run RGI on 4 popular benchmarks: \textit{Amazon Computers}, \textit{Amazon Photos}, \textit{Coauthor CS} and \textit{Coauthor Physics}. Moreover, we include the large scale \textit{ogbn-arxiv} dataset. Finally, inductive learning evaluation is addressed with the challenging \textit{PPI} dataset. The statistics of the datasets are shown in Table \ref{tab:datasets}.

\paragraph{Linear evaluation} We follow the linear evaluation protocol on graphs to assess the quality of the representations as proposed in \cite{velickovic2018deep}. It consists of first fitting a GNN encoder in a fully self-supervised manner, freezing the weights of the encoder, obtaining the node-level representations and fitting a linear classifier to a downstream task without backpropagating the gradients through the encoder. For comparability with other methods, the transductive settings are evaluated in terms of accuracy of the predictions whereas for the \textit{PPI} dataset we employ the micro-average F1-score. As usual, since there is no public split for these datasets, \textit{Amazon Computers}, \textit{Amazon Photos}, \textit{Coauthor CS} and \textit{Coauthor Physics} are randomly split into train/validation/test (10\% / 10\% / 80\%). The ogbn-arxiv dataset is split with the partition provided by Open Graph Benchmark \cite{hu2021open}. To evaluate on the \textit{PPI} dataset, we employ the standard pre-defined split, which has 20 graphs to fit the model, 2 graphs for validation and another two for testing.

\subsection{Transductive learning}

\paragraph{Architecture} As in \cite{velickovic2018deep, thakoor2021bootstrapped}, among others, we fix the encoder $f_{\Theta}$ to be a $L=2$ layer GCN \cite{kipf2017semisupervised} for the Amazon and Co-authorship datasets. We have set the output dimensionality to be $512$ for all of them and the hidden, $1024$. We include batch normalization \cite{10.5555/3045118.3045167} and ReLU activation after the first layer but none of them is included after the second convolutional layer.  We also apply dropout regularization to the input graph. The architecture for the \textit{ogbn-arxiv} dataset is slightly different. Based on \cite{thakoor2021bootstrapped}, we employ a $L=3$ GCN \cite{kipf2017semisupervised} encoder, with layer normalization and ReLU activation after the first and the second layer.

To obtain the global node embeddings, we propagate the embeddings with the normalized adjacency matrix without self-loops for $K=1$ step, that is:

\begin{equation}
    \label{eq:gk_transductive}
    \mathbf{V} = g_K(\mathbf{U},\mathbf{A})= \left ( {\mathbf{D}}^{-\frac{1}{2}} {\mathbf{A}} {\mathbf{D}}^{-\frac{1}{2}} \right ) \mathbf{U}
\end{equation}
Finally, the networks $h_{\phi}$ and $h_{\psi}$ are implemented with a two layer FCNN and ReLU activation in the hidden layer. However, no batch normalization is employed for these auxiliary networks.

\paragraph{Numerical results} Table \ref{tab:results_transductive} shows the mean accuracy of the linear evaluation protocol on the transductive graph settings. Our results are the average of 20 model weight initializations and data splits. The other results are extracted from previous reports. We can observe that RGI performs competitively in all datasets despite its simplicity and achieves state of the art in some of them. 
In except of \textit{Amazon Computers} dataset, RGI is trained for $1,000$ epochs whereas other methods such as BGRL require $10,000$ epochs \cite{thakoor2021bootstrapped}.  For instance, when BGRL is trained only for $1,000$ epochs, its performance in all datasets is dropped \cite{bielak2021graph}. Finally, as for the \textit{ogbn-arxiv} dataset, while RGI performs competitively, state-of-the-art performance is achieved by masking and augmentation-based methods, which suggest that graph data augmentations and the invariance to them are an appropriate solution for this particular dataset and task.

\begin{table}[ht]
\centering
\caption{Transductive and inductive datasets}
\label{tab:datasets}
\begin{adjustbox}{width=1\textwidth}
\small
\begin{tabular}{lccccc}
\toprule
\textbf{Name} & \textbf{Task} & \textbf{Num. Nodes} & \textbf{Num. Edges} & \textbf{Node features} & \textbf{Num. Classes} \\
\midrule
Amazon Computers & Transductive & 13,752 & 245,861 & 767 & 10\\
Amazon Photos & Transductive & 7,650 & 119,081 & 745 & 8 \\
Coauthor CS & Transductive & 18,333 & 81,894 & 6,805 & 15 \\
Coauthor Physics & Transductive & 34,493 & 247,962 & 8,415 & 5 \\
\midrule
ogbn-arxiv & Transductive & 168,343 & 1,166,243 & 128 & 40 \\
\midrule
PPI (24 graphs) & Inductive & 56,944 & 818,716 & 50 & 121 (multilabel) \\

\bottomrule
\end{tabular}
\end{adjustbox}
\vskip 0.1in
\end{table}

\begin{table}[ht]
\centering
\caption{Classification accuracies on transductive datasets averaged for 20 weight initializations.}
\label{tab:results_transductive}
\begin{adjustbox}{width=1\textwidth}
\small
\begin{tabular}{lccccc}
\toprule
\textbf{Method} & \textbf{Am. Computers} & \textbf{Am. Photos} & \textbf{Co. CS} & \textbf{Co. Physics} & \textbf{Ogbn-Arxiv} \\
\midrule
Raw ft. &  73.81 $\pm$ 0.00 & 78.53 $\pm$ 0.00 & 90.37 $\pm$ 0.00 & 93.58 $\pm$ 0.00 &  55.50 $\pm$ 0.23\\
\midrule
Random-Init & 86.46 $\pm$ 0.38 & 92.08 $\pm$ 0.48 & 91.64 $\pm$ 0.29 & 93.71 $\pm$ 0.29 & 68.94 $\pm$ 0.15 \\
DGI \cite{velickovic2018deep}& 83.95 $\pm$ 0.47 & 91.61 $\pm$ 0.22 & 92.15 $\pm$ 0.63 & 94.51 $\pm$ 0.52 & 70.34 $\pm$ 0.16\\
GRACE \cite{Zhu:2020vf}& 89.53 $\pm$ 0.35 & 92.78 $\pm$ 0.45 & 91.12 $\pm$ 0.20 & OOM & 71.51 $\pm$ 0.11\\
G-BT \cite{bielak2021graph} & 88.14 $\pm$ 0.33 & 92.63 $\pm$ 0.44 & 92.95 $\pm$ 0.17 & 95.07 $\pm$ 0.17 & 70.12 $\pm$ 0.18 \\
CCA-SSG \cite{zhang2021canonical} & 88.74 $\pm$ 0.28 & 93.14 $\pm$ 0.14 & 93.31 $\pm$ 0.22 & 95.38 $\pm$ 0.06 & 71.24 $\pm$ 0.20 \\
BGRL \cite{thakoor2021bootstrapped} & 90.34 $\pm$ 0.19 & \textbf{93.17 $\pm$ 0.30} & 93.31 $\pm$ 0.13 & 95.73 $\pm$ 0.05 & \textbf{71.64 $\pm$ 0.12} \\
\midrule
RGI (\textit{ours}) & \textbf{90.45 $\pm$ 0.08} & 92.94 $\pm$ 0.09 & \textbf{93.37 $\pm$ 0.07} & \textbf{95.91 $\pm$ 0.09} & 70.33 $\pm$ 0.25 \\
\bottomrule
\end{tabular}
\end{adjustbox}
\vskip 0.05in
\small
\footnotesize *OOM refers to out-of-memory error in a 16GB GPU.
\vskip 0.1in
\end{table}

\subsection{Inductive learning}
\label{sec:eval_inductive}

\begin{table}[ht]
\centering
\caption{Classification micro-average F1 score on PPI dataset averaged for 20 weight initializations.}
\label{tab:results_ppi}
\small
\begin{tabular}{lc}
\toprule
\textbf{Method} & \textbf{PPI} \\
\midrule  
Raw ft. & 42.20 \\
\midrule
Rdm-Init & 62.60 $\pm$ 0.20 \\
DGI \cite{velickovic2018deep} & 63.80 $\pm$ 0.20 \\
GMI \cite{peng2020graph} & 65.00 $\pm$ 0.02 \\
GRACE \cite{Zhu:2020vf} & 69.71 $\pm$ 0.17 \\
G-BT \cite{bielak2021graph} & 70.49 $\pm$ 0.19 \\
BGRL \cite{thakoor2021bootstrapped} &
70.49 $\pm$ 0.05 \\
GraphMAE \cite{hou2022graphmae}* & 63.60 $\pm$ 0.29 \\
\midrule
RGI (\textit{ours}) & \textbf{72.16 $\pm$ 0.11}\\
\bottomrule
\end{tabular}
\vskip 0.05in
\footnotesize *Results are replicated with author's official code but reducing the dimensionality to 512, as it is employed in RGI and others, as well as limiting the linear evaluation method to the one employed in RGI for a fairer comparison.
\end{table}

\paragraph{Architecture} Based on previous reports \cite{bielak2021graph, thakoor2021bootstrapped}, we implement the encoder with a $L=3$ layer GAT \cite{veličković2018graph} with ELU activation and skip connections. Both hidden and output dimensionality are set to $D=512$. Although a mean-pooling scheme would be more appropriate for inductive settings \cite{NIPS2017_5dd9db5e}, we employ the same propagation scheme than in transductive setting of Equation \ref{eq:gk_transductive}. In Section \ref{sec:ablations_propagation} we study the effect of $g_K$. As initial node features are sparse in the PPI dataset, we apply dropout regularization to the graph before propagation. The reconstruction networks $h_{\phi}$ and $h_{\psi}$ have the same architecture as in the transductive setting.

\paragraph{Numerical results} Table \ref{tab:results_ppi} shows the micro-average F1-score of the linear evaluation averaged for 20 runs. RGI outperforms the current state of the art on this challenging dataset and only requires $2,000$ epochs to reach this performance.
\section{Ablation study}
\label{sec:ablations}
In this section, we perform an exhaustive ablation study to evaluate the influence of the different components of RGI. Generally, results show that RGI performs robustly for different design configurations, specifically in transductive settings.

\subsection{Loss function components} 
\label{sec:ablations_loss}
RGI maximizes the mutual information between local and global node embeddings by tackling the reconstruction error between them as well as entropy regularization, which is composed by a variance and a covariance term. The final objective is defined as a weighted sum of reconstruction, variance and covariance terms by $\lambda_1$, $\lambda_2$ and $\lambda_3$, respectively. In order to measure the importance of every term, we have trained RGI by setting their corresponding weight $\lambda$ to zero and observing the difference in performance. The results are shown in Table \ref{tab:ablations_loss}. We observe that variance and covariance terms are required to avoid the collapse of the representations, specifically in the inductive setting, which is more sensitive to these parameters. Additionally, we observe that the reconstruction error is also needed to make the algorithm achieve state-of-the-art performance, validating hence the usage of propagated embeddings as supervision signals.

\begin{table}[ht]
\centering
\caption{Effect of the components of the loss function. Linear evaluation accuracy (micro-average F-Score for PPI dataset) after fitting and encoder with distinct configurations averaged for 5 weight initializations and data splits.}
\label{tab:ablations_loss}
\begin{adjustbox}{width=1\textwidth}
\small
\begin{tabular}{cccccc}
\toprule
$\mathcal{L}$ & \textbf{Am. Computers} & \textbf{Am. Photos} & \textbf{Co. CS} & \textbf{Co. Physics}  & \textbf{PPI} (F-Score) \\
\midrule
$\lambda_1=0$ & 89.78 $\pm$ 0.09 & 92.25 $\pm$ 0.20 & 93.53 $\pm$ 0.06 & 95.96 $\pm$ 0.05 & 68.83 $\pm$ 0.08 \\
$\lambda_2=0$ & 79.53 $\pm$ 0.96 & 89.62 $\pm$ 0.39 & 84.27 $\pm$ 0.39 & 92.57 $\pm$ 0.22 & 0.00 $\pm$ 0.00 \\
$\lambda_3=0$ & 85.70 $\pm$ 0.21 & 92.32 $\pm$ 0.12 & 92.51 $\pm$ 0.09 & 95.48 $\pm$ 0.07 & 49.43 $\pm$ 0.21 \\
$\lambda_1,\lambda_2,\lambda_3 \neq 0 $ & 90.45 $\pm$ 0.07 &  92.88 $\pm$ 0.13 & 93.37 $\pm$ 0.03 & 95.93 $\pm$ 0.04 & 72.25 $\pm$ 0.09 \\
\bottomrule
\end{tabular}
\end{adjustbox}
\vskip -0.1in
\end{table}

\subsubsection{Feature propagation function}
\label{sec:ablations_propagation}
A propagation function $g_K(\mathbf{U},\mathbf{A})$ is used to create the supervision signals from the node embeddings. In this section, we evaluate different shift operators, namely the mean-propagation $\mathbf{D}^{-1}\mathbf{A}$, the normalized adjacency matrix $\mathbf{D}^{-\frac{1}{2}} {\mathbf{A}} {\mathbf{D}}^{-\frac{1}{2}}$ and the normalized Laplacian $\mathbf{I} - \mathbf{D}^{-\frac{1}{2}} {\mathbf{A}} {\mathbf{D}}^{-\frac{1}{2}}$ as well as different values of $K$ (1, 2 and 5). Note that none of them include the self-loops as opposed to most GNN propagation schemes. Table \ref{tab:ablations_propagation} shows that RGI performs robustly in all settings as long as the global propagation scheme captures the low-frequency components of the data, obtained with the adjacency matrix (either mean pooling or normalized), which is motivated by the homophily of the downstream task on the graph.

\begin{table}[ht]
\centering
\caption{Effect of global propagation scheme. Linear evaluation accuracy (micro-average F-Score for PPI dataset) for different schemes to obtain the global node embeddings propagating the local ones averaged for 5 weight initializations and data splits.}
\label{tab:ablations_propagation}
\begin{adjustbox}{width=1\textwidth}
\small
\begin{tabular}{ccccccc}
\toprule
$g(\mathbf{A})$ & $K$ & \textbf{Am. Computers} & \textbf{Am. Photos} & \textbf{Co. CS} & \textbf{Co. Physics}  & \textbf{PPI} (F-Score) \\
\midrule

$\mathbf{D}^{-1}\mathbf{A}$ & 1 & 90.52 $\pm$ 0.09 & 92.81 $\pm$ 0.14 & 93.39 $\pm$ 0.04 & 95.89 $\pm$ 0.07 & 71.29 $\pm$ 0.11 \\
$\mathbf{D}^{-1}\mathbf{A}$ & 2 & 90.43 $\pm$ 0.16 & 92.84 $\pm$ 0.15 & 93.37 $\pm$ 0.04 & 95.89 $\pm$ 0.07 & 71.28 $\pm$ 0.08 \\
$\mathbf{D}^{-1}\mathbf{A}$ & 5 & 90.41 $\pm$ 0.17 & 92.84 $\pm$ 0.08 & 93.26 $\pm$ 0.06 & 95.86 $\pm$ 0.08 & 70.06 $\pm$ 0.08 \\
\midrule
$\mathbf{D}^{-\frac{1}{2}} {\mathbf{A}} {\mathbf{D}}^{-\frac{1}{2}}$ & 1 & 90.46 $\pm$ 0.06 & 92.89 $\pm$ 0.15 & 93.37 $\pm$ 0.02 & 95.92 $\pm$ 0.04 & 72.25 $\pm$ 0.13 \\
$\mathbf{D}^{-\frac{1}{2}} {\mathbf{A}} {\mathbf{D}}^{-\frac{1}{2}}$ & 2 & 90.25 $\pm$ 0.10  & 92.69 $\pm$ 0.14 & 93.35 $\pm$ 0.02 & 95.91 $\pm$ 0.04 &  71.94 $\pm$ 0.08\\
$\mathbf{D}^{-\frac{1}{2}} {\mathbf{A}} {\mathbf{D}}^{-\frac{1}{2}}$ & 5 & 90.08 $\pm$ 0.16 & 92.75 $\pm$ 0.11 & 93.23 $\pm$ 0.05 & 95.85 $\pm$ 0.05 & 70.08 $\pm$ 0.17 \\
\midrule
$\mathbf{I} - \mathbf{D}^{-\frac{1}{2}} {\mathbf{A}} {\mathbf{D}}^{-\frac{1}{2}}$ & 1 & 84.22 $\pm$ 0.28 & 90.23 $\pm$ 0.20 & 93.01 $\pm$ 0.12 &  95.50 $\pm$ 0.11 & 61.81 $\pm$ 0.08 \\
$\mathbf{I} - \mathbf{D}^{-\frac{1}{2}} {\mathbf{A}} {\mathbf{D}}^{-\frac{1}{2}}$ & 2 & 84.58 $\pm$ 0.22 & 90.75 $\pm$ 0.25 & 92.89 $\pm$ 0.14 & 95.46 $\pm$ 0.08  & 61.35 $\pm$ 0.13\\
$\mathbf{I} - \mathbf{D}^{-\frac{1}{2}} {\mathbf{A}} {\mathbf{D}}^{-\frac{1}{2}}$ & 5 & 88.65 $\pm$ 0.16 & 92.62 $\pm$ 0.20 & 91.82 $\pm$ 0.27 & 95.53 $\pm$ 0.09 & 64.50 $\pm$ 0.12 \\

\bottomrule
\end{tabular}
\end{adjustbox}

\vskip -0.1in
\end{table}

\section{Discussion}
\label{sec:discussion}
We have presented an algorithm that achieves state-of-the-art performance even though it is much simpler in terms of theoretical interpretation, architecture and training strategy than other methods. In this section, we detail aspects of RGI and its advantages and limitations with respect to other methods.

\paragraph{Augmentation-free and non-contrastive} Graph contrastive learning is the most popular approch to avoid the collapse of the representations by relying on negative pair sampling \cite{velickovic2018deep, sun2019infograph, Zhu:2020vf, NEURIPS2021_ff1e68e7, https://doi.org/10.48550/arxiv.2204.04874}. BGRL \cite{thakoor2021bootstrapped},  CCA-SSG \cite{zhang2021canonical} and G-BT \cite{bielak2021graph}, instead, do not need negative samples. The former avoids collapse with an asymmetric architecture and the other two, regularizing the covariance matrix of the representations. In this work, we also adopt a regularized solution since it is more interpretable and naturally avoids the collapse whereas it is still an open problem to theoretically demonstrate that bootstrapped methods avoid trivial solutions. However, these methods train an encoder with the invariance via data augmentation principle. It has been stated that transformations that drop information may modify the semantics of the graph so the invariance assumption may be incorrect and not hold for all graph domains. RGI, instead, requires no transformations as simply employs feature propagation to create the supervision signals, so that it is much more intuitive and only involves graph convolution-like operations.

\paragraph{Single branch architecture} Current state of the art algorithms usually rely on a joint embedding architecture that require multiple forward passes at each training step. For example, BGRL \cite{thakoor2021bootstrapped} forwards the graph through the encoder four times at each iteration. On the contrary, RGI is much more efficient and only performs one single forward pass while achieving similar performance to BGLR and other methods.

\paragraph{Simplicity and effectiveness} Being augmentation free, requiring a single branch architecture and being non-contrastive, substantially reduce both the time and space complexity of the algorithm for self-supervised learning on graphs with respect to other methods. First of all, the complexity of contrastive methods is quadratic with respect to the number of nodes, that is $\mathcal{O}(N^2)$. Instead, covariance regularization makes the complexity be quadratic on the embedding space dimensionality $\mathcal{O}(D^2)$, which is generally much lower than the number of nodes $D << N$. Additionally, being a single branch architecture reduces the time and space complexity of computing multiple forward and backward passes through the encoder during training.

\paragraph{Evaluation protocol}
In the context of self-supervised learning of graph neural networks, while the linear evaluation protocol is commonly used to compare different methods, it is important to consider the impact of other factors such as the differences in the encoder architecture, as well as the training protocol and evaluation-related hyperparameters. In our experiments, we observed a drop in performance on GraphMAE \cite{hou2022graphmae}, which reports state-of-the-art performance in the original paper, when we re-ran the method with the same dimensionality and linear evaluation training than the majority of the other methods of the comparison. This highlights the importance of a standardized evaluation protocol for self-supervised GNNs, which can help to ensure fair comparisons between different methods and provide a more accurate assessment of their performance. Therefore, we suggest that future work in the field of self-supervised learning on graphs should consider addressing these concerns.

\paragraph{Limitations}
RGI has been shown to be an effective algorithm to learn node-level representations in a self-supervised manner. However, by definition, RGI operates on homogeneous graphs so that, in general, it could not be directly applied to heterogeneous scenarios. Although RGI requires a single branch architecture, which simplifies the training procedure with respect other methods, it comes at expense of extra depth to obtain the propagated embeddings, which increase the complexity in settings in which neighbor sampling is a must. Nonetheless, our experiments show that $K=1$ is generally the best choice as the number of steps to propagate the information through the graph, so that this issue is alleviated.
\section{Conclusion}
\label{sec:conclusion}
In this work we have introduced RGI, a simple yet effective framework for self-supervised learning on graphs based on the propagation of node embeddings to generate supervision signals. The objective function maximizes the mutual information between node embeddings and their propagation by addressing the reconstruction between them and regularizing the covariance matrix of the representations to avoid the total and dimensional collapse. We have shown that, in spite of its simplicity and that it does not require training for too long, it achieves state-of-the-art performance on multiple transductive and inductive datasets following the linear evaluation protocol on downstream tasks.

\section{Acknowledgements}
This work has been supported by the Spanish Research Agency (AEI) under project PID2020-116907RB-I00 of the call MCIN/ AEI /10.13039/501100011033 and the FI-AGAUR grant funded by Direcció General de Recerca (DGR) of Departament de Recerca i Universitats (REU) of the Generalitat de Catalunya.

\bibliography{refs}
\bibliographystyle{elsarticle-num}

\appendix
\section{Implementation details}
\label{sec:implementation}

\begin{algorithm}[tb]
   \caption{Pseudo-code PyTorch implementation of RGI}
   \label{algo:pytorch}
\begin{algorithmic}
    \STATE \
    \STATE \PyComment{obtain local views}
    \STATE \PyCode{u = f(x, adj)}
    \STATE \PyComment{get symmetric adjacency matrix}
    \STATE \PyCode{deg = adj.sum(dim=-1)} \PyComment{adj dense and symmetric}
    \STATE \PyCode{deg = diagonal(inverse(deg).sqrt())}
    \STATE \PyCode{shift = deg @ adj @ deg}
    \STATE \PyComment{obtain global views}
    \STATE \PyCode{v = matrix\_power(shift,K) @ u}
    \STATE \PyComment{batch size and dimensionality}
    \STATE \PyCode{N, D = u.size()}
    \STATE \PyComment{reconstruction between views}
    \STATE \PyCode{v\_prime = h\_phi(u)}
    \STATE \PyCode{u\_prime = h\_psi(v)}
    \STATE \PyComment{mean - centering}
    \STATE \PyCode{u\_norm = u - u.mean(dim=0)}
    \STATE \PyCode{v\_norm = v - v.mean(dim=0)}
    \STATE \PyComment{covariance matrices}
    \STATE \PyCode{cov\_u = (u\_norm.T @ u\_norm) / (N-1) }
    \STATE \PyCode{cov\_v = (v\_norm.T @ v\_norm) / (N-1) }
    \STATE \PyComment{reconstruction loss}
    \STATE \PyCode{rec\_loss = mse\_loss(u, u\_prime) + mse\_loss(v, v\_prime)}
    \STATE \PyComment{variance loss}
    \STATE \PyCode{var\_loss = diagonal(cov\_u).add\_(-1).pow\_(2).sum() / D + \textbackslash}
    \STATE \qquad \qquad \qquad \PyCode{diagonal(cov\_v).add\_(-1).pow\_(2).sum() / D}
    \STATE \PyComment{covariance loss}
    \STATE \PyCode{cov\_loss = off\_diagonal(cov\_u).pow\_(2).sum() / D + + \textbackslash}
    \STATE \qquad \qquad \quad \PyCode{off\_diagonal(cov\_v).pow\_(2).sum() / D }
    \STATE \PyComment{total loss}
    \STATE \PyCode{loss = lambd\_1 * rec\_loss + lambd\_2 * var\_loss + lambd\_3 * cov\_loss}
    \STATE \PyComment{optimizer}
    \STATE \PyCode{loss.backward()}
    \STATE \PyCode{optimizer.step()}
\end{algorithmic}
\end{algorithm}

\paragraph{Implementation} RGI and all neural networks are implemented in PyTorch \cite{paszke2017automatic} and PyTorch Geometric \cite{Fey/Lenssen/2019}. Algorithm \ref{algo:pytorch} shows a PyTorch-like pseudo-code implementation of RGI. All our experiments are run in a single 16GB GPU. For linear evaluation, we have employed Sci-kit Learn library \cite{scikit-learn} for \textit{Amazon Computers}, \textit{Amazon Photos}, \textit{Coauthor CS} and \textit{Coauthor Physics} datasets and PyTorch for the \textit{PPI} dataset, taking the implementation from \cite{thakoor2021bootstrapped}.

\paragraph{Optimization} All models have been trained with Adam optimizer \cite{DBLP:journals/corr/KingmaB14} and a weight decay of $10^{-5}$. Additionally, we have employed a learning rate scheduler with linear warmup for $n_{warmup}$ epochs and cosine decay for the remaining $n_{epochs} - n_{warmup}$, where $n_{epochs}$ is the total number of epochs. The values of $n_{warmup}$ and $n_{epochs}$ for the different datasets can be found in Table \ref{tab:hyperparameters}. In transductive settings, we perform full-graph optimization at each gradient step. Alternatively, for the \textit{PPI} dataset, which is multi-graph, we set a batch size of 1 graph for each gradient step due to memory constraints of the GAT encoder. 

\paragraph{Node feature normalization} Input node features are normalized row-wise with the L1 norm for transductive settings whereas no normalization is applied to \textit{PPI} node features. Before fitting the linear classifier with the obtained representations, they are row-wise L2 normalized.

\paragraph{Hyperparameters} Some of the hyperparameters have been tuned with a small search whereas others have been fixed. Table \ref{tab:hyperparameters} shows the hyper-parameters settings employed to obtain the results in Table \ref{tab:results_transductive} and Table \ref{tab:results_ppi}. The search space of the optimized hyper-parameters is the following:

\begin{itemize}
    \item Propagation matrix : $\{ \mathbf{D}^{-1} \mathbf{A}, \mathbf{D}^{-\frac{1}{2}} {\mathbf{A}} {\mathbf{D}}^{-\frac{1}{2}}, \mathbf{I} - \mathbf{D}^{-\frac{1}{2}} {\mathbf{A}} {\mathbf{D}}^{-\frac{1}{2}}\}$
    \item Number of global propagation steps $(K)$ : $\{ 1, 2, 5\}$
    \item Number of training epochs $(n_{epochs})$ : $\{ 1000, 2000, 5000 \}$
    \item Learning rate $(lr)$ : $\{ 10^{-3}, 10^{-4}, 10^{-5} \}$
    \item Reconstruction loss weight $(\lambda_1)$: $\{10, 15, 20\}$
    \item Variance loss weight $(\lambda_2)$: $\{5, 10, 15\}$
    \item Covariance loss weight $(\lambda_3)$: $\{1, 5, 10\}$
    \item Dropout regularization probability input graph $(p_{input})$ : $\{ 0.0, 0.5\}$
    \item Dropout regularization probability before propagation $(p_{local})$ : $\{ 0.0, 0.5\}$  
\end{itemize}
The number of GNN layers $(L)$ and embedding size $(D)$ have been fixed according to previous reports for better comparability. The number of warmup epochs is set to be $n_{epochs} / 10$.

\begin{table}[ht]
\centering
\caption{Hyperparameters.}
\label{tab:hyperparameters}
\begin{adjustbox}{width=1\textwidth}
\small
\begin{tabular}{lccccc}
\toprule

\textbf{Method} & \textbf{Am. Computers} & \textbf{Am. Photos} & \textbf{Co. CS} & \textbf{Co. Physics} & \textbf{PPI} \\
\midrule
$L$ & 2 & 2 & 2 & 2 & 3\\
$K$ & 1 & 1 & 1 & 1 & 1\\
$D$ & 512 & 512 & 512 & 512 & 512 \\
Propagation & $\mathbf{D}^{-\frac{1}{2}} {\mathbf{A}} {\mathbf{D}}^{-\frac{1}{2}}$ & $\mathbf{D}^{-\frac{1}{2}} {\mathbf{A}} {\mathbf{D}}^{-\frac{1}{2}}$ & $\mathbf{D}^{-\frac{1}{2}} {\mathbf{A}} {\mathbf{D}}^{-\frac{1}{2}}$ & $\mathbf{D}^{-\frac{1}{2}} {\mathbf{A}} {\mathbf{D}}^{-\frac{1}{2}}$ & $\mathbf{D}^{-\frac{1}{2}} {\mathbf{A}} {\mathbf{D}}^{-\frac{1}{2}}$ \\
\midrule
$n_{epochs}$ & 5000 & 1000 & 1000 & 1000 & 2000\\
$n_{warmup}$ & 500 & 100 & 100 & 100 & 200 \\
$lr$ & $10^{-4}$ & $10^{-4}$ & $10^{-5}$ & $10^{-5}$ & $10^{-4}$ \\
\midrule
$\lambda_1$ & 10 & 10 & 20 & 20 & 15 \\
$\lambda_2$ & 5 & 5 & 15 & 15 & 10\\
$\lambda_3$ & 1 & 1 & 1 & 1 & 10\\
\midrule
$p_{input}$ & 0.5 & 0.5 & 0.5 & 0.5 & 0.0 \\
$p_{local}$ & 0.0 & 0.0 & 0.0 & 0.0 & 0.5 \\
\bottomrule
\end{tabular}

\end{adjustbox}
\end{table}

\section{Dataset details}
\label{sec:datasets}

\textbf{Amazon Computers, Amazon Photos} are extracted from the Amazon co-purchase graph \cite{10.1145/2766462.2767755} whose nodes represent products and edges encode whether two elements are usually purchased together. Node features are a vector representation of a bag-of-words from product's reviews and nodes are classified into 10 (for Computers) and 8 (for Photos) classes, given by the product category. Since there is no standard split for these datasets, we randomly split the nodes into (10/10/80\%) for train, validation and test, respectively.

\textbf{Coauthor Computer Science, Coauthor Physics} are from the Microsoft Academic Graph \cite{10.1145/2740908.2742839} from the KDD Cup 2016 challenge. Nodes represent authors, which are connected by an edge if they have co-authored a paper. Node features encode the keywords of each author's papers and authors are labeled into 15 (for CS) and 5 (for Physics) according to their most active field of study. We also use a random split into (10/10/80\%) for train, validation and test, respectively.

\textbf{PPI} is a proteint-protein interaction network \cite{Zitnik2017}. It consists of 24 independent graphs that correspond to different human tissues, whose nodes are proteins and edges represent interactions between them. Node features are biological properties and they are labeled according to the protein functions. We employ 20 graphs for training, 2 for validation and 2 for testing.

\end{document}